\documentclass[11pt]{article}
\usepackage{hyphenat}

\usepackage{authblk}

\usepackage{graphicx}
\usepackage{caption}
\usepackage{subcaption}
\usepackage{float}

\usepackage{amsmath, amssymb, amsfonts, mathtools, amsthm}
\usepackage{esint}        
\usepackage{mathrsfs}     

\usepackage{algorithm}
\usepackage{algorithmic}

\usepackage{tikz}
\usetikzlibrary{shapes.geometric, arrows, positioning, arrows.meta}

\usepackage{color}
\usepackage{xcolor}
\usepackage{microtype}    
\usepackage{parskip}      
\usepackage{indentfirst}  
\usepackage{geometry}
\usepackage{listings}
\usepackage{tcolorbox}

\usepackage{fancybox}
\usepackage{thmtools}
\usepackage{mdframed}
\usepackage{shadethm}

\usepackage{cite}

\usepackage{fancyhdr}
\usepackage[shortlabels]{enumitem}

\usepackage{lipsum}       
\usepackage{setspace}

\usepackage[pdftex, colorlinks=true, linkcolor=blue, citecolor=blue, urlcolor=blue]{hyperref}

\tikzstyle{startstop} = [
    rectangle, 
    rounded corners, 
    minimum width=3cm, 
    minimum height=1cm,
    text centered, 
    draw=black, 
    fill=red!30
]
\tikzstyle{io} = [
    trapezium, 
    trapezium left angle=70, 
    trapezium right angle=110, 
    minimum width=3cm, 
    minimum height=1cm, 
    text centered, 
    draw=black, 
    fill=blue!30
]
\tikzstyle{process} = [
    rectangle, 
    minimum width=3cm, 
    minimum height=1cm, 
    text centered, 
    draw=black, 
    fill=orange!30
]
\tikzstyle{decision} = [
    diamond, 
    minimum width=3cm, 
    minimum height=1cm, 
    text centered, 
    draw=black, 
    fill=green!30
]
\tikzstyle{arrow} = [
    thick,
    ->,
    >=stealth
]

\newtheorem{thm}{Theorem}[section]
\newtheorem{lemma}[thm]{Lemma}

\newtheorem{proposition}[thm]{Proposition}

\newtheorem{remark}[thm]{Remark}
\newtheorem{assumption}[thm]{Assumption}

\title{\textbf{Resolving gradient pathology in physics-informed epidemiological models}}

\author[1]{Nickson Golooba}
\author[1*]{Woldegebriel Assefa Woldegerima}

\footnotesize{
\affil[1]{Disease-informed Modeling, Methods \& Systems (DIMMS) Lab, Department of Mathematics and Statistics, York University, Toronto, Canada}
}

\date{*\textit{corresponding author}: wassefaw@yorku.ca} 

\begin{document}
\maketitle

\begin{abstract}
Physics-informed neural networks (PINNs) are increasingly used in mathematical epidemiology to bridge the gap between noisy clinical data and compartmental models, such as the susceptible-exposed-infected-removed (SEIR) model. However, training these hybrid networks is often unstable due to competing optimization objectives. As established in recent literature on ``gradient pathology," the gradient vectors derived from the data loss and the physical residual often point in conflicting directions, leading to slow convergence or optimization deadlock. While existing methods attempt to resolve this by balancing gradient magnitudes or projecting conflicting vectors, we propose a novel method, conflict-gated gradient scaling (CGGS), to address gradient conflicts in physics-informed neural networks for epidemiological modelling, ensuring stable and efficient training and a computationally efficient alternative. This method utilizes the cosine similarity between the data and physics gradients to dynamically modulate the penalty weight. Unlike standard annealing schemes that only normalize scales, CGGS acts as a geometric gate: it suppresses the physical constraint when directional conflict is high, allowing the optimizer to prioritize data fidelity, and restores the constraint when gradients align. We prove that this gating mechanism preserves the standard $O(1/T)$ convergence rate for smooth non-convex objectives, a guarantee that fails under fixed-weight or magnitude-balanced training when gradients conflict. We demonstrate that this mechanism autonomously induces a curriculum learning effect, improving parameter estimation in stiff epidemiological systems compared to magnitude-based baselines. Our empirical results show improved peak recovery and convergence over magnitude-based methods. 
\end{abstract}
 \textbf{Keywords:} Physics-informed neural networks; gradient pathology; Conflict-Gated Gradient Scaling; epidemiological modeling; cosine similarity; convergence rates in non-convex optimization.
\section{Introduction}

Mathematical epidemiology has traditionally relied on compartmental models, such as the susceptible-exposed-infected-removed (SEIR) framework, to understand how infectious diseases spread. These models are valuable because they follow strict conservation laws and biological logic \cite{kermack1927contribution}. However, they are often too rigid to capture the complex, noisy reality of actual clinical data. On the other hand, deep learning approaches are excellent at fitting data but often fail to respect biological rules, leading to predictions that may violate basic constraints, such as keeping population counts positive.

In recent years, physics-informed neural networks (PINNs) \cite{raissi2019physics, karniadakis2021physics} have emerged as a way to combine the strict biological rules of the SEIR model with the flexible data-fitting power of neural networks. The standard objective of these frameworks is to force the model to fit the clinical data while simultaneously respecting the differential equations of epidemiology.

However, achieving this balance in practice is mathematically challenging. The network must minimize two different errors at the same time, that is to say, the error from the noisy data and the error from the physical equations. Often, the gradient vector from the data loss points in a different direction than the gradient vector from the physics loss. This phenomenon was formally described by Wang et al. \cite{wang2021understanding} as \textit{Gradient Pathology}. When these gradients conflict, the optimization process can stall or oscillate, failing to find a solution that satisfies both the biology and the data.

There are several existing ways to handle this problem, but they have limitations in this specific context. One common approach is \textit{Learning Rate Annealing} (LRA) \cite{wang2021understanding}, which balances the losses based on the size (magnitude) of their gradients. While this stops one loss from dominating the other, it ignores the direction; it does not tell the optimizer what to do when the gradients are pointing in opposite directions. Another approach is \textit{Gradient Surgery} (PCGrad) \cite{yu2020gradient}, which projects conflicting gradients onto a normal plane. While effective, this method is computationally expensive. Even recent variants like GCond \cite{alves2025gcond}, which attempt to reduce this cost via gradient accumulation, are primarily designed for large-scale computer vision tasks rather than the specific stiffness constraints found in epidemiological differential equations. More recently, researchers have proposed second-order optimization methods, such as SOAP \cite{wang2025gradient}, or conflict-free updates like ConFIG \cite{liu2024config} to align these gradients using curvature information. Some have even attempted to use cosine similarity against the total gradient \cite{wang2024cosine}. While highly accurate, these methods often entail significant computational overhead or magnitude bias compared to standard first-order optimizers like Adam.

To bridge this gap, we propose a novel method, Conflict-Gated Gradient Scaling (CGGS), to address gradient conflicts in physics-informed neural networks for epidemiological modelling, ensuring stable and efficient training. Instead of projecting gradients or calculating complex curvature matrices, we introduce a simple geometric ``gate" into the loss function. Our method calculates the cosine similarity between the data gradient and the physics gradient. If they point in opposite directions (conflict), the method automatically lowers the weight of the physical constraint, allowing the model to learn the data trend first. As the gradients align, the weight increases to enforce the biology. This creates an autonomous curriculum that stabilizes training without the high computational cost of second-order methods. Our method is a dynamic, geometry-based approach to modulate the physics gradient based on gradient alignment.

Our main contributions are as follows:
\begin{enumerate}
    \item We formulate a hybrid loss landscape for SEIR modelling that integrates continuous differential equations with discrete logical constraints to prevent biological violations.
    \item We analyze the specific ``Gradient Pathology" in stiff epidemiological models, showing that directional conflict often leads to optimization deadlock.
    \item We propose the CGGS algorithm. Unlike standard magnitude balancing, CGGS uses geometric alignment to dynamically switch between data-priority and physics-priority modes, ensuring robust convergence even with noisy data.
    \item We prove that CGGS converges to first-order stationary points of the data loss at the standard $O(1/T)$ rate for smooth non-convex objectives (Theorem~\ref{thm:main}), a guarantee that fails under fixed-weight training when gradients conflict (Proposition~\ref{prop:deadlock}). We further confirmed that the proposed CGGS avoids Pareto deadlock and ensures robust convergence even under gradient conflict. 
\end{enumerate}

\section{Mathematical formulation}

We consider the problem of approximating the dynamics of an infectious disease over a time domain $\Omega = [0, T]$. Let the state of the system be represented by a vector-valued function $\mathbf{u}(t) = [S(t), E(t), I(t), R(t)]^T$, which we approximate using a deep neural network parametrized by weights and biases $\theta$, denoted as $\mathbf{u}_\theta(t)$.

\subsection{The compartmental constraint (ODE)}
We adopt the Susceptible-Exposed-Infected-Removed (SEIR) model as our continuous governing equation. This is a standard framework in epidemiology that tracks the flow of the population between four compartments. In the context of scientific machine learning, we do not solve these equations forward in time; rather, we compute the residual error of the network's output against the differential operators.

The residual vector $\mathcal{F}(\mathbf{u}_\theta, t)$ is defined as:
\begin{equation}
    \mathcal{F}(\mathbf{u}_\theta, t) = 
    \begin{pmatrix}
    \frac{dS}{dt} + \beta \frac{SI}{N} \\[4pt]
    \frac{dE}{dt} - \beta \frac{SI}{N} + \sigma E \\[4pt]
    \frac{dI}{dt} - \sigma E + \gamma I \\[4pt]
    \frac{dR}{dt} - \gamma I
    \end{pmatrix},
\end{equation}
where $N$ is the total population, $\beta$ is the transmission rate, $\sigma$ is the incubation rate, and $\gamma$ is the recovery rate. If the network perfectly models the disease dynamics, the squared norm $\|\mathcal{F}\|^2$ should approach zero.

\subsection{The logical constraint (discrete knowledge)}
Standard PINNs rely solely on the ODE residuals to guide the network. However, neural networks are universal approximators that do not naturally respect state-space boundaries. A frequent issue in training is the emergence of ``biological hallucinations,'' such as negative population counts, which are mathematically possible in the unconstrained function space but biologically impossible.

To address this, we introduce a set of logical constraints. We require that all compartments remain non-negative and that the cumulative count of Removed individuals (recoveries plus deaths) is monotonically non-decreasing. We formulate these rules using the Rectified Linear Unit (ReLU) as a penalty function:
\begin{equation}
    \mathcal{L}_{logic}(\theta) = \frac{1}{|\mathcal{T}_{col}|} \sum_{t \in \mathcal{T}_{col}} \left( \sum_{k \in \{S,E,I,R\}} \text{ReLU}(-u_k(t)) + \text{ReLU}\left( R(t) - R(t+\delta t) \right) \right)
\end{equation}
Here, the first term imposes a penalty if any population compartment drops below zero, and the second term imposes a penalty if the Removed population decreases over a time step $\delta t$.

\subsection{The unified optimization problem}
We formulate the training of the network as a multi-objective optimization problem. We seek the parameters $\theta^*$ that minimize the total loss $\mathcal{L}_{total}$:

\begin{equation}\label{eq:total_loss}
    \theta^* = \arg \min_{\theta} \left( \mathcal{L}_{data}(\theta, \mathcal{D}) + \lambda_{phy} \mathcal{L}_{ODE}(\theta) + \lambda_{logic} \mathcal{L}_{logic}(\theta) \right)
\end{equation}

where $\mathcal{L}_{data}$ is the Mean Squared Error (MSE) computed on the sparse clinical observations $\mathcal{D}$, and $\mathcal{L}_{ODE}$ is the mean squared residual of the differential equations computed on a dense set of collocation points. 

Crucially, standard approaches treat the weighting factors $\lambda_{phy}$ and $\lambda_{logic}$ as fixed hyperparameters. However, as we demonstrate in the following section, this static formulation is susceptible to severe optimization failures due to the conflicting spectral properties of the data and physics manifolds.

\section{Gradient pathology and spectral analysis}
\label{sec:gradient_pathology}

The convergence of the optimization problem defined in Eq.~\eqref{eq:total_loss} is strictly dictated by the geometry of the loss landscape. In this section, we analyze the behaviour of the gradient descent update vector to understand why standard training often fails in epidemiological modelling.

Let $\mathbf{g}_{data} = \nabla_\theta \mathcal{L}_{data}$ and $\mathbf{g}_{phy} = \nabla_\theta \mathcal{L}_{ODE}$ represent the gradient vectors associated with the observational data and the physical constraints, respectively.

\subsection{The gradient conflict regimes}
In standard backpropagation, the update direction for the network parameters $\theta$ is determined by the weighted sum of the individual gradients: $\mathbf{g}_{total} = \mathbf{g}_{data} + \lambda \mathbf{g}_{phy}$. However, effective learning requires that these components work constructively. To quantify their interaction, we utilize the cosine similarity metric, $S_{cos}$, a standard measure in multi-task learning \cite{yu2020gradient}:

\begin{equation}\label{eq:cosine}
    S_{cos}(\theta) = \frac{\mathbf{g}_{data} \cdot \mathbf{g}_{phy}}{\| \mathbf{g}_{data} \| \| \mathbf{g}_{phy} \|}.
\end{equation}

A publication in 2025 \cite{wang2025gradient} generalizes the pairwise cosine similarity to multiple gradients (more than two loss terms). One popular alignment score for $n$ gradient vectors $\mathbf{g}_1, \mathbf{g}_2, \cdots, \mathbf{g}_n,$ is:
$$A(\mathbf{g}_1, \mathbf{g}_2, \cdots, \mathbf{g}_n)=  2 \left( \frac{ \sum_{i=1}^n \frac{\mathbf{g}_i}{\|\mathbf{g}_i\|} }{n} \right)^2-1.  $$
Particularly, for $n = 2$, this exactly recovers the standard cosine similarity: $A(\mathbf{g}_1, \mathbf{g}_2) =S_{cos}(\theta)$, given in \eqref{eq:cosine}.

Building on the classifications established in prior literature \cite{yu2020gradient, wang2021understanding}, we distinguish three distinct regimes of optimization behavior based on this spectral alignment:
\begin{itemize}
    \item \textbf{Cooperative regime ($S_{cos} > 0$):} Both gradients share a general descent direction. A step decreases both the data error and the physical residual simultaneously. When there is a high positive cosine  (close to $1$), the two loss terms ``agree" on how to update the parameters, yielding fast and stable convergence.
    \item \textbf{Orthogonal regime ($S_{cos} \approx 0$):} The gradients are uncorrelated. There is no directional agreement. Minimizing one objective does not necessarily aid or hinder the other.
    \item \textbf{Conflicting regime ($S_{cos} < 0$):} This is the pathological case. The data objective and the physical objective demand parameter updates in opposing directions. When $S_{cos}$ is negative,  the gradients pull in conflicting directions, which is a major source of gradient pathology in PINNs, causing slow convergence, oscillations, or getting stuck in poor local minima. Particularly, a cosine close to  $-1$ leads to a complete conflict.
\end{itemize}

\subsection{Pareto stationarity and deadlock}\label{sec:Pareto}

The mathematical problem of the conflicting regime, $S_{cos} < 0$ is not merely empirical, it is a structural consequence of multi-objective optimization. A classical characterization due to D\'{e}sid\'{e}ri \cite{desideri2012multiple} establishes that a point $\theta$ is \textit{Pareto-stationary} with respect to multiple objectives if and only if the origin lies in the convex hull of their gradients (Definition~1.1 and Lemma~1.2 in \cite{desideri2012multiple}). For two loss terms, this reduces to finding $\alpha \in [0,1]$ such that $\alpha\, \mathbf{g}_{data} + (1-\alpha)\, \mathbf{g}_{phy} = \mathbf{0}$.

In PINNs for epidemiology, the fast incubation dynamics ($E \to I$, governed by $\sigma$) and the slow transmission dynamics ($S \to E$, governed by $\beta S I / N$) operate on widely separated timescales. During early training, the network has not yet resolved this scale separation, and the physics residual drives $\mathbf{g}_{phy}$ into directions that oppose $\mathbf{g}_{data}$. When a magnitude-balanced weight $\lambda = \|\mathbf{g}_{data}\|/\|\mathbf{g}_{phy}\|$ happens to match the conflict ratio, the gradients cancel exactly, as we stated in the following proposition.

\begin{proposition}[Pareto deadlock under fixed weights]
\label{prop:deadlock}
Let $\mathbf{g}_{data}$ and $\mathbf{g}_{phy}$ be non-zero gradient vectors satisfying $\mathbf{g}_{data} = -c\, \mathbf{g}_{phy}$ for some $c > 0$. Then the magnitude-balanced weight $$\lambda_{std} = \frac{\|\mathbf{g}_{data}\|}{\|\mathbf{g}_{phy}\|}  $$ yields a zero update:
\[
\mathbf{g}_{total} = \mathbf{g}_{data} + \lambda_{std}\, \mathbf{g}_{phy} = \mathbf{0}.
\]
The optimizer perceives this as a stationary point and halts, despite both $\mathcal{L}_{data}$ and $\mathcal{L}_{ODE}$ remaining large.
\end{proposition}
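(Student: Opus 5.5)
The identity is a direct computation; the only care needed is in interpreting the final sentence about the optimizer halting.

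\emph{Step 1: compute the weight.} Take norms in the hypothesis $\mathbf{g}_{data} = -c\,\mathbf{g}_{phy}$ with $c>0$. This gives $\|\mathbf{g}_{data}\| = c\,\|\mathbf{g}_{phy}\|$. Since $\mathbf{g}_{phy}\neq \mathbf{0}$, the weight $\lambda_{std}$ is well defined and equals $c$.

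\emph{Step 2: substitute.} Plugging into the weighted sum,
\[
\mathbf{g}_{total} = \mathbf{g}_{data} + \lambda_{std}\,\mathbf{g}_{phy} = -c\,\mathbf{g}_{phy} + c\,\mathbf{g}_{phy} = \mathbf{0}.
\]
This identity is the whole mathematical content of the proposition.

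\emph{Step 3: link to Pareto stationarity.} Set $\alpha = 1/(1+c)\in(0,1)$. Then $\alpha\,\mathbf{g}_{data} + (1-\alpha)\,\mathbf{g}_{phy} = \frac{1}{1+c}\left(-c\,\mathbf{g}_{phy} + c\,\mathbf{g}_{phy}\right) = \mathbf{0}$. So $\theta$ is Pareto-stationary in the sense of D\'{e}sid\'{e}ri recalled in Section~\ref{sec:Pareto}. Equivalently, $\theta$ is a first-order stationary point of the scalarized objective $\mathcal{L}_{data} + \lambda_{std}\mathcal{L}_{ODE}$ with $\lambda_{std}$ held fixed.

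\emph{Step 4: make ``halts'' precise.} Any first-order method whose step is a function of $\mathbf{g}_{total}$ and vanishes when $\mathbf{g}_{total}=\mathbf{0}$ leaves $\theta$ unchanged. Plain gradient descent $\theta \leftarrow \theta - \eta\,\mathbf{g}_{total}$ does so exactly. For Adam, this holds only with zero moment history; otherwise the statement concerns the instantaneous descent direction. Finally, the claim that both losses ``remain large'' needs no proof beyond the hypothesis: each individual gradient is non-zero, so neither $\mathcal{L}_{data}$ nor $\mathcal{L}_{ODE}$ is individually stationary, and in particular neither is at a minimizer. I would state this honestly as ``neither loss is at a stationary point'' rather than claim any quantitative largeness.

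The main obstacle is this interpretive last step, not the algebra. I would restrict the ``halts'' claim to memoryless gradient updates and note that exact antiparallelism is a non-generic boundary case. What the argument demonstrates is that magnitude balancing cannot escape it, since it rescales but does not reorient the gradients.
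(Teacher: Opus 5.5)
Your proof is correct and matches the paper's: you take norms to get $\lambda_{std}=c$, substitute to obtain $\mathbf{g}_{total}=\mathbf{0}$, and exhibit $\alpha = 1/(1+c)$ to verify D\'{e}sid\'{e}ri's Pareto-stationarity condition, exactly as the paper does. Your Step~4 goes beyond the paper, which never justifies the informal ``halts'' and ``remain large'' claims; restricting ``halts'' to memoryless updates and weakening ``remain large'' to ``neither loss is individually stationary'' is the right way to read that sentence.
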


\begin{proof}
Since $\mathbf{g}_{data} = -c\, \mathbf{g}_{phy}$, we have $\|\mathbf{g}_{data}\| = c\,\|\mathbf{g}_{phy}\|$, hence $\lambda_{std} = c$. 

We recall that $ \mathbf{g}_{total} = \mathbf{g}_{data} + \lambda_{std}\, \mathbf{g}_{phy} $. So, substituting $\mathbf{g}_{data} = -c$ and $\lambda_{std} = c$, we obtain $$\mathbf{g}_{total} = (-c\, \mathbf{g}_{phy}) + c\, \mathbf{g}_{phy} = \mathbf{0}.$$
Moreover, setting $\alpha = 1/(1+c) \in (0,1)$ yields $\alpha\, \mathbf{g}_{data} + (1-\alpha)\, \mathbf{g}_{phy} = \frac{1}{1+c}(\mathbf{g}_{data} + c\, \mathbf{g}_{phy})  = \mathbf{0}$, which is exactly the Pareto-stationarity condition of D\'{e}sid\'{e}ri \cite{desideri2012multiple}.

Hence, we have $\mathbf{g}_{total} = \mathbf{g}_{data} + \lambda_{std}\, \mathbf{g}_{phy} = \mathbf{0}.$  
\end{proof}

This result explains the baseline failures observed in Section~\ref{sec:baseline}. The standard PINN with fixed weights is not failing because the network architecture is inadequate, nor because the data is too noisy; it is failing because the loss landscape contains Pareto-stationary traps induced by the stiffness of the SEIR equations. A geometrically aware mechanism is therefore necessary, not merely helpful, to break this symmetry.

\section{Proposed method: Conflict-Gated Gradient Scaling (CGGS)}

Current state-of-the-art methods, such as Learning Rate Annealing (LRA) \cite{wang2021understanding}, attempt to mitigate gradient pathologies by balancing the magnitudes of the gradient vectors (i.e., $\lambda \propto \|\nabla \mathcal{L}_{data}\|/\|\nabla \mathcal{L}_{phy}\|$). However, as identified in Section \ref{sec:Pareto}, magnitude balancing is insufficient when the gradient vectors are directionally conflicting. In such regimes, simply equalizing the magnitudes can lead to a ``stalemate" where the optimizer oscillates without progress.

To address this, we introduce CGGS, conceptually visualized in Figure \ref{fig:schematic}.

\subsection{The update rule}\label{sec:update_rule}
We propose a composite weighting scheme that modulates the Lagrange multiplier based on both spectral alignment (cosine similarity) and magnitude ratio. The adaptive parameter $\hat{\lambda}^{(t)}$ for the physical constraint is updated as follows:

\begin{equation}\label{eq:cggs_update}
    \hat{\lambda}^{(t)} = \alpha \hat{\lambda}^{(t-1)} + (1-\alpha) \cdot \underbrace{\frac{\|\nabla \mathcal{L}_{data}\|}{\|\nabla \mathcal{L}_{phy}\| + \epsilon}}_{\text{Magnitude Balance}} \cdot \underbrace{\sigma(\kappa \cdot S_{cos})}_{\text{Conflict Gate}}
\end{equation}

where:
\begin{itemize}
    \item $S_{cos} = \frac{\nabla \mathcal{L}_{data} \cdot \nabla \mathcal{L}_{phy}}{\|\nabla \mathcal{L}_{data}\| \|\nabla \mathcal{L}_{phy}\|}$ is the pairwise cosine similarity.
    \item $\sigma(\cdot)$ is the sigmoid function, acting as a soft gate.
    \item $\kappa$ is a scaling factor (set to 5.0 in our experiments) to sharpen the transition.
    \item $\alpha$ is a momentum term to smooth the trajectory.
\end{itemize}

The complete training procedure is summarized in Algorithm~\ref{alg:cggs}. At each iteration, the method computes a single additional inner product ($\mathbf{g}_{data} \cdot \mathbf{g}_{phy}$) and two gradient norms, yielding a per-step overhead of $O(|\theta|)$, identical to standard backpropagation.

\begin{algorithm}[H]
\caption{CGGS training for epidemiological PINNs}
\label{alg:cggs}
\begin{algorithmic}[1]
\REQUIRE Network $\mathbf{u}_\theta$; clinical data $\mathcal{D}$; collocation grid $\mathcal{T}$; learning rate $\eta$; gate sharpness $\kappa$; momentum $\alpha$; regularization $\epsilon$; fixed logical weight $\lambda_{logic}$
\ENSURE Trained parameters $\theta^{*}$
\STATE Initialize $\theta^{(0)}$, \; $\hat{\lambda}^{(0)} \gets 1.0$
\FOR{$t = 0, 1, \ldots, T-1$}
  \STATE $\mathbf{g}_{data} \gets \nabla_\theta \mathcal{L}_{data}(\theta^{(t)}, \mathcal{D})$
  \STATE $\mathbf{g}_{phy} \gets \nabla_\theta \mathcal{L}_{ODE}(\theta^{(t)}, \mathcal{T})$
  \STATE $\mathbf{g}_{logic} \gets \nabla_\theta \mathcal{L}_{logic}(\theta^{(t)}, \mathcal{T})$
  \STATE $S_{cos} \gets \dfrac{\mathbf{g}_{data} \cdot \mathbf{g}_{phy}}{\max\!\bigl(\|\mathbf{g}_{data}\|\,\|\mathbf{g}_{phy}\|,\; \epsilon\bigr)}$ \hfill\textit{// Eq.~\eqref{eq:cosine}}
  \STATE $\hat{\lambda}^{(t)} \gets \alpha\,\hat{\lambda}^{(t-1)} + (1-\alpha) \cdot \dfrac{\|\mathbf{g}_{data}\|}{\|\mathbf{g}_{phy}\| + \epsilon} \cdot \sigma(\kappa \cdot S_{cos})$ \hfill\textit{// Eq.~\eqref{eq:cggs_update}}
  \STATE $\mathbf{d}^{(t)} \gets \mathbf{g}_{data} + \hat{\lambda}^{(t)}\, \mathbf{g}_{phy} + \lambda_{logic}\, \mathbf{g}_{logic}$
  \STATE $\theta^{(t+1)} \gets \theta^{(t)} - \eta\, \mathbf{d}^{(t)}$
\ENDFOR
\RETURN $\theta^{(T)}$
\end{algorithmic}
\end{algorithm}

Three design choices are worth emphasizing in the context of epidemiological modelling. First, the data anchor ($\lambda_{data} = 1$) ensures that clinical observations always drive the fit; no amount of ODE conflict can suppress the data term. Second, the logical constraint weight is held fixed because non-negativity of $S$, $E$, $I$, and $R$ is a hard biological requirement that must always be enforced, unlike the physics loss, which can be temporarily relaxed during conflict. Third, the exponential moving average (EMA, momentum $\alpha$) prevents the gate from responding to single-step noise in the gradient estimates, important when training on the sparse, irregular sampling schedules typical of clinical surveillance data. During our sensitivity experiments across $\alpha \in [0.8, 0.99]$, the final losses varied by less than 25\% on a single seed. At $\alpha = 0.99$, the $\hat{\lambda}$ trajectory showed a visibly delayed response to conflict transitions. Therefore, we adopt $\alpha = 0.9$, which is also the conventional default for first-moment EMA in adaptive optimization.

\begin{figure}[H]
    \centering
    \includegraphics[width=\textwidth]{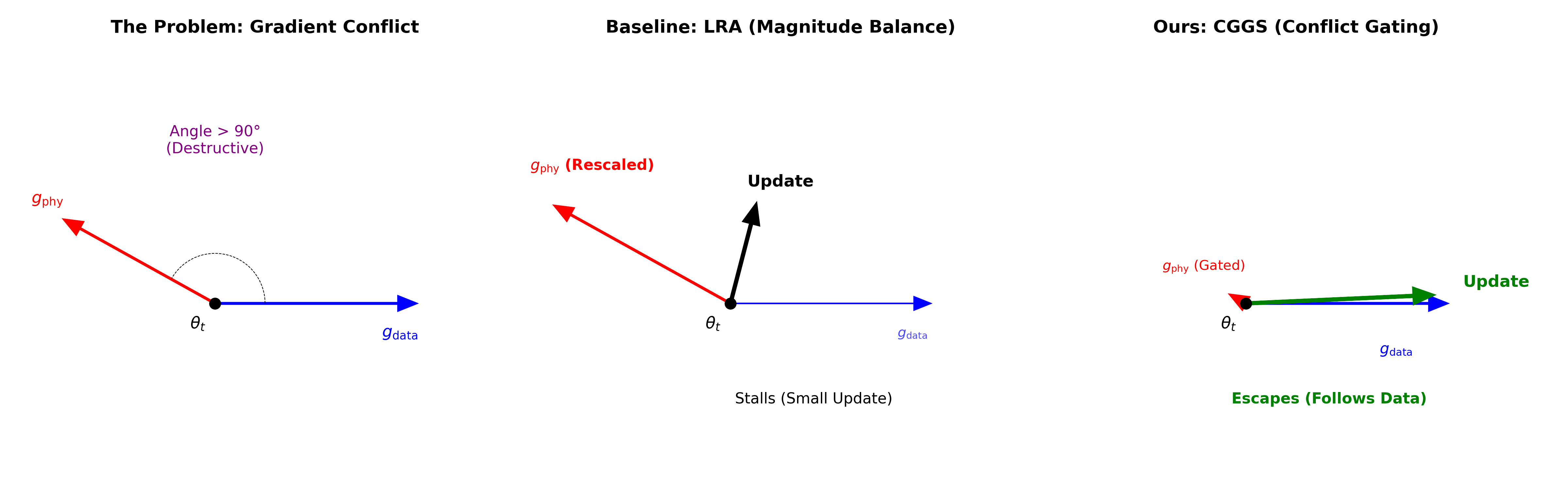}
    \caption{Conceptual visualization of CGGS. \textbf{(Left)} The data and physics gradients conflict (opposing directions). \textbf{(Center)} Standard Magnitude Balancing (LRA) equalizes the lengths but ignores the angle. The resultant update vector (black) is minimized, leading to optimization stagnation (``Deadlock"). \textbf{(Right)} CGGS detects the negative cosine similarity and ``gates" (shrinks) the physics gradient. The resultant update vector (green) follows the data gradient, allowing the optimizer to escape the local minimum.}
    \label{fig:schematic}
\end{figure}

\subsubsection{Differentiation from prior art}
It is crucial to distinguish CGGS from existing gradient manipulation techniques:
\begin{itemize}
    \item \textbf{Vs. Learning Rate Annealing (LRA) \cite{wang2021understanding}:} LRA scales weights based solely on gradient magnitudes. As shown in our ablation study later, magnitude balancing fails to account for directional conflict. CGGS extends LRA by introducing the cosine-based gate.
    \item \textbf{Vs. PCGrad \cite{yu2020gradient}:} PCGrad resolves conflict by projecting the gradient vector onto the normal plane of the conflicting task. While effective, this projection operation scales poorly ($O(K^2)$) and can discard useful information. CGGS retains the original gradient direction but modulates its scale via the gate ($O(K)$), acting as a soft filter rather than a hard projection.
    \item \textbf{Vs. Total-Gradient Weighting:} Recent works like \cite{wang2024cosine} have proposed weighting based on similarity to the total gradient vector. We deliberately avoid this. The total gradient is dominated by the largest component; comparing a component against the total often masks true conflict. By using a pairwise metric ($S_{cos}$ between Data and Physics), CGGS explicitly detects when the physical constraint fights the data, independent of their relative magnitudes.
\end{itemize}

\subsubsection{Design choices and stability}\label{sec:design}
\textbf{Design of the gating function:} The choice of the sigmoid function $\sigma(\cdot)$ is deliberate. While soft gating mechanisms based on cosine similarity are well-established in neural network architectures for feature routing, most notably in Mixture-of-Experts models \cite{shazeer2017outrageously}, their application to the optimization landscape of PINNs is novel. By adapting this architectural mechanism to gradient scaling, we create a differentiable path that allows for smooth transitions between conflict and cooperation, avoiding the instability inherent in hard thresholds.

\textbf{The anchor strategy:} Unlike approaches that adapt all loss coefficients simultaneously, we fix $\lambda_{data} = 1$ and adapt $\lambda_{phy}$ relative to it. This ``anchoring" strategy is critical. By holding the data term constant, we ensure that the optimization objective does not collapse to zero in the event of severe conflict (where the gate $\sigma \to 0$). This guarantees that the data fidelity term remains the primary driver of gradient descent when physical constraints are suppressed.

\textbf{Treatment of logical constraints:} The gating mechanism is applied exclusively to the physical residual term $\mathcal{L}_{phy}$. The logical constraint $\mathcal{L}_{logic}$ maintains a fixed weight ($\lambda_{logic}=1$). We exclude $\mathcal{L}_{logic}$ from adaptive gating because logical constraints, such as non-negativity, represent absolute biological boundaries, not soft priors. Gating this term would allow the model to violate physical reality to fit outliers. Furthermore, the ReLU-based logical loss is sparse, which leads to numerical instability in cosine calculations.

\subsection{Convergence analysis}\label{sec:convergence}

Having established that fixed-weight PINNs can stall at Pareto-stationary traps (Proposition~\ref{prop:deadlock}), we now show that the CGGS mechanism avoids these traps and converges at the standard rate for smooth non-convex optimization.

To isolate the core contribution of the conflict gate, the analysis below considers the instantaneous CGGS update (momentum $\alpha = 0$) applied to the data-physics pair. The logical constraint, whose weight is held fixed and whose gradient is sparse in practice (Section~\ref{sec:design}), is treated as a bounded perturbation (Remark~\ref{rem:logic}). The extension to $\alpha > 0$ is discussed in Remark~\ref{rem:ema}. This separation is standard in the optimization literature: momentum and auxiliary terms are included in Algorithm~\ref{alg:cggs} for practical stability but omitted from the convergence proof to isolate the descent guarantees of the gate itself.

We impose the following regularity conditions.

\begin{assumption}[Smoothness]\label{ass:smooth}
$\mathcal{L}_{data}$ is continuously differentiable and $L$-smooth (i.e., the gradient is $L$- Lipschitz continuous). That is,   $\|\nabla\mathcal{L}_{data}(\theta) - \nabla\mathcal{L}_{data}(\theta')\| \leq L\,\|\theta - \theta'\|$ for all $\theta, \theta'$.
\end{assumption}

\begin{assumption}[Bounded below]\label{ass:bounded}
$\mathcal{L}^{*} = \inf_\theta \mathcal{L}_{data}(\theta) > -\infty$.
\end{assumption}

\begin{assumption}[Bounded gradients]\label{ass:gradnorm}
The gradient norms are bounded along the optimization trajectory: $\sup_{t}\, \|\mathbf{g}_{data}(\theta_t)\| \leq G$ and $\sup_{t}\, \|\mathbf{g}_{phy}(\theta_t)\| \leq G$ for some $G < \infty$.
\end{assumption}

Assumptions~\ref{ass:smooth} - \ref{ass:bounded} are standard in the analysis of first-order methods \cite{ghadimi2013stochastic, nesterov2013introductory}. Assumption~\ref{ass:smooth} holds for networks with smooth activations ($\tanh$, softplus) on the bounded time domain $[0,T]$; Assumption~\ref{ass:bounded} is immediate since $\mathcal{L}_{data}$ is a mean squared error. Assumption~\ref{ass:gradnorm} holds whenever the network outputs and their parameter gradients remain bounded along the optimization trajectory. In our SEIR experiments, the logical loss restricts all compartments to $[0, N]$, the input domain $[0, T]$ is compact, and the activations $\tanh$ are globally Lipschitz with constant~$1$. By the chain rule, $\|\nabla_{\theta}\mathcal{L}\| \le C\,\|\nabla_{u}\mathcal{L}\|$ 
where $C$ depends on the network depth, width, and activation Lipschitz 
constants; since the outputs and losses are bounded, the parameter 
gradients inherit this bound along any finite optimization trajectory.

For the remainder of this section, we define the instantaneous CGGS weight (setting $\alpha = 0$):
\begin{equation}\label{eq:lambda_inst}
\lambda(t) = \frac{\|\mathbf{g}_{data}\|}{\|\mathbf{g}_{phy}\| + \epsilon}\; \sigma\bigl(\kappa\, S_{cos}\bigr),
\end{equation}
and the core update direction:
\begin{equation}\label{eq:core_update}
\mathbf{d}(t) = \mathbf{g}_{data} + \lambda(t)\, \mathbf{g}_{phy}.
\end{equation}

\subsubsection{Boundedness of the adaptive weight}

\begin{lemma}[Uniform Boundedness]\label{lem:bounded}
Under Assumption~\ref{ass:gradnorm}, the instantaneous weight \eqref{eq:lambda_inst} satisfies
\[
0 \;<\; \lambda(t) \;<\; \frac{G}{\epsilon} \qquad \forall\; t \geq 0.
\]
Moreover, for any $\alpha \in [0,1)$, the EMA-smoothed weight $\hat{\lambda}^{(t)} = \alpha\,\hat{\lambda}^{(t-1)} + (1-\alpha)\,\lambda(t)$ satisfies $\hat{\lambda}^{(t)} \leq \Lambda_{\max} \triangleq \max\!\bigl(\hat{\lambda}^{(0)},\; G/\epsilon\bigr)$.
\end{lemma}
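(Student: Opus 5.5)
The plan is to bound the two factors of the instantaneous weight \eqref{eq:lambda_inst} separately and multiply, then pass the bound through the EMA recursion by induction.

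For the instantaneous weight, the sigmoid satisfies $\sigma(x) \in (0,1)$ for every finite $x$. Since $S_{cos} \in [-1,1]$, the argument $\kappa S_{cos}$ lies in $[-\kappa,\kappa]$. This gives $\sigma(\kappa S_{cos}) \geq \sigma(-\kappa) > 0$ and $\sigma(\kappa S_{cos}) < 1$. For the magnitude ratio, the denominator satisfies $\|\mathbf{g}_{phy}\| + \epsilon \geq \epsilon > 0$, and Assumption~\ref{ass:gradnorm} gives $\|\mathbf{g}_{data}\| \leq G$, so
\[
0 \leq \frac{\|\mathbf{g}_{data}\|}{\|\mathbf{g}_{phy}\|+\epsilon} \leq \frac{G}{\epsilon}.
\]
Multiplying, the strict inequality $\sigma<1$ gives $\lambda(t) < G/\epsilon$ whenever $\|\mathbf{g}_{data}\|>0$. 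When $\mathbf{g}_{data}=\mathbf{0}$ we get $\lambda(t)=0 < G/\epsilon$, assuming $G>0$ (if $G=0$ the statement is vacuous).

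The strict lower bound $\lambda(t)>0$ is the delicate step. It requires $\mathbf{g}_{data}\neq\mathbf{0}$, and also that $S_{cos}$ is well defined, which needs both gradients nonzero or the $\max(\cdot,\epsilon)$ convention of Algorithm~\ref{alg:cggs}. I would handle this by stating explicitly that positivity holds at every non-stationary iterate, that is, whenever $\mathbf{g}_{data}(\theta_t)\neq \mathbf{0}$. In the degenerate case one only has $\lambda(t)=0$; this does not affect the later use of the bound, since the iterate is then already first-order stationary for $\mathcal{L}_{data}$. With the $\max(\cdot,\epsilon)$ convention, $S_{cos}$ is a finite real number, so the sigmoid factor remains in $(0,1)$ and only the magnitude factor can vanish.

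For the EMA weight, I would argue by induction on $t$ with the hypothesis $\hat\lambda^{(t-1)} \leq \Lambda_{\max}$. The base case is trivial because $\hat\lambda^{(0)}\le \Lambda_{\max}$ by definition. For the inductive step, $\hat\lambda^{(t)}$ is a convex combination (weights $\alpha$ and $1-\alpha$, both in $[0,1]$) of $\hat\lambda^{(t-1)}\le\Lambda_{\max}$ and $\lambda(t) < G/\epsilon \le \Lambda_{\max}$. It is therefore at most $\Lambda_{\max}$. Unrolling the recursion gives the same conclusion and also shows $\hat\lambda^{(t)}\ge 0$, since $\hat\lambda^{(0)}=1>0$ and each $\lambda(t)\ge 0$. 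The main obstacle is not computational; it is only to be careful about the degenerate zero-gradient case in the strict lower bound.
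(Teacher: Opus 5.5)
Your proposal is correct and matches the paper's argument: bound the gate $\sigma(\kappa S_{cos})$ by $1$ and the magnitude ratio by $G/\epsilon$, then pass the bound through the EMA as a convex combination. Your induction is equivalent to the paper's unrolling $\hat{\lambda}^{(t)} \leq \alpha^{t}\hat{\lambda}^{(0)} + (1-\alpha^{t})\,G/\epsilon$, and your caveat on the strict lower bound is more careful than the paper, which infers $\lambda(t)>0$ from $\sigma\in(0,1)$ alone even though $\lambda(t)=0$ whenever $\mathbf{g}_{data}=\mathbf{0}$.
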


\begin{proof}
Since $\sigma(\cdot) \in (0,1)$ and $\|\mathbf{g}_{phy}\| + \epsilon \geq \epsilon$, from \eqref{eq:lambda_inst} and using Assumption \ref{ass:gradnorm}, we get  $0 < \lambda(t) < G/\epsilon$, where $G>0$ is an upper bound constant as introduced in Assumption \ref{ass:gradnorm}.

For the EMA, set $B = G/\epsilon$. The recursion $\hat{\lambda}^{(t)} < \alpha\,\hat{\lambda}^{(t-1)} + (1-\alpha)\,B$ is contractive with fixed point $B$: unrolling gives $\hat{\lambda}^{(t)} = \alpha^{t}\,\hat{\lambda}^{(0)} + B\,(1-\alpha^{t}) \leq \max(\hat{\lambda}^{(0)}, B)$.
\end{proof}

\subsubsection{Descent direction under gradient conflict}

Du et al. \cite{du2018adapting} proved (their Proposition~1) that a hard binary gate based on gradient cosine similarity guarantees convergence to critical points of the main loss. The CGGS sigmoid gate is a smooth generalization of this binary mechanism. The following result quantifies the descent constant.

\begin{lemma}[Sufficient descent]\label{lem:descent}
Let $\mathbf{d}(t)$ be the core update direction \eqref{eq:core_update} with the instantaneous weight \eqref{eq:lambda_inst}. Define
\[
M_\kappa \;\triangleq\; \max_{s \in [0,1]}\; \frac{s}{1 + e^{\kappa s}}\,,
\]
which depends only on the gate sharpness $\kappa$. Then
\[
\bigl\langle \mathbf{d}(t),\; \mathbf{g}_{data} \bigr\rangle \;\geq\; (1 - M_\kappa)\,\|\mathbf{g}_{data}\|^{2} \qquad \forall\; t.
\]
Furthermore, $\|\mathbf{d}(t)\| \leq 2\,\|\mathbf{g}_{data}\|$.
\end{lemma}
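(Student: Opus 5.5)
The plan is to expand the inner product directly, write everything in terms of the cosine $S_{cos}$, and then split into cases according to the sign of $S_{cos}$. Write $r \triangleq \|\mathbf{g}_{phy}\|/(\|\mathbf{g}_{phy}\|+\epsilon) \in [0,1)$. Substituting \eqref{eq:lambda_inst} into \eqref{eq:core_update} and using $\mathbf{g}_{data}\cdot\mathbf{g}_{phy} = S_{cos}\,\|\mathbf{g}_{data}\|\,\|\mathbf{g}_{phy}\|$ gives
\[
\bigl\langle \mathbf{d}(t),\mathbf{g}_{data}\bigr\rangle
= \|\mathbf{g}_{data}\|^{2} + \frac{\|\mathbf{g}_{data}\|}{\|\mathbf{g}_{phy}\|+\epsilon}\,\sigma(\kappa S_{cos})\,S_{cos}\,\|\mathbf{g}_{data}\|\,\|\mathbf{g}_{phy}\|
= \|\mathbf{g}_{data}\|^{2}\bigl(1 + r\,S_{cos}\,\sigma(\kappa S_{cos})\bigr).
\]
The degenerate cases $\mathbf{g}_{data}=\mathbf{0}$ or $\mathbf{g}_{phy}=\mathbf{0}$ need separate treatment. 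If $\mathbf{g}_{data}=\mathbf{0}$, then $\lambda(t)=0$ and both claims are trivially $0\ge 0$. If $\mathbf{g}_{phy}=\mathbf{0}$, then $\mathbf{d}(t)=\mathbf{g}_{data}$ and both claims hold with room to spare. Either way the cosine's value is irrelevant, even though it is regularized through $\max(\cdot,\epsilon)$ in Algorithm~\ref{alg:cggs}.

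For the descent inequality, it suffices to show $r\,S_{cos}\,\sigma(\kappa S_{cos}) \ge -M_\kappa$.

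\emph{Cooperative or orthogonal case, $S_{cos}\ge 0$.} The correction term is nonnegative, so the factor is at least $1 \ge 1-M_\kappa$, because $M_\kappa\ge 0$.

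\emph{Conflicting case, $S_{cos}<0$.} Put $s=-S_{cos}\in(0,1]$, using Cauchy--Schwarz. Since $\sigma(-\kappa s) = 1/(1+e^{\kappa s})$, we get
\[
S_{cos}\,\sigma(\kappa S_{cos}) = -\frac{s}{1+e^{\kappa s}} \ge -M_\kappa
\]
by the definition of $M_\kappa$. Multiplying by $r\in[0,1)$ only shrinks the magnitude of a nonpositive quantity, so $r\,S_{cos}\,\sigma(\kappa S_{cos}) \ge -M_\kappa$.

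Combining the two cases yields $\langle \mathbf{d}(t),\mathbf{g}_{data}\rangle \ge (1-M_\kappa)\|\mathbf{g}_{data}\|^2$. I would also note that $M_\kappa \le \max_s s/2 = 1/2$, since $1+e^{\kappa s}\ge 2$. Hence the constant $1-M_\kappa$ is at least $1/2$, which is strictly positive, and so $\mathbf{d}(t)$ is a genuine descent direction for $\mathcal{L}_{data}$ uniformly in $t$. This is the quantitative version of the binary-gate argument of Du et al.

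For the norm bound, I will use the triangle inequality and then the same factorization:
\[
\|\mathbf{d}(t)\| \le \|\mathbf{g}_{data}\| + \lambda(t)\,\|\mathbf{g}_{phy}\| = \|\mathbf{g}_{data}\|\bigl(1 + r\,\sigma(\kappa S_{cos})\bigr) \le 2\,\|\mathbf{g}_{data}\|,
\]
since $r<1$ and $\sigma<1$. The computations themselves are routine. The main point requiring care is the conflicting case. There one must see that the magnitude-balancing factor exactly cancels $\|\mathbf{g}_{phy}\|$ up to the factor $r<1$, which is what reduces the problem to the one-dimensional maximization defining $M_\kappa$. The degenerate zero-gradient cases also have to be handled so that $S_{cos}$ is never actually used when it is undefined.
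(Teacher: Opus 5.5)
Your proof is correct and follows the paper's argument essentially step for step. It uses the same factorization $\langle \mathbf{d}(t),\mathbf{g}_{data}\rangle=\|\mathbf{g}_{data}\|^2\bigl(1+r\,S_{cos}\,\sigma(\kappa S_{cos})\bigr)$, the same sign split with the conflicting case reduced to the definition of $M_\kappa$, and the same triangle-inequality bound for $\|\mathbf{d}(t)\|$. Your explicit treatment of the zero-gradient cases is slightly cleaner than the paper's blanket assumption $\|\mathbf{g}_{data}\|\,\|\mathbf{g}_{phy}\|>\epsilon$, which matters only for the $\max(\cdot,\epsilon)$-regularized cosine of Algorithm~\ref{alg:cggs} and not for the exact cosine in \eqref{eq:lambda_inst}.
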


\begin{proof}
Assuming the gradients are not vanishingly small 
($\|g_{\mathrm{data}}\|\,\|g_{\mathrm{phy}}\| > \epsilon$, which holds 
whenever training has not already converged), expand the inner product:
\begin{align}
\langle \mathbf{d}(t),\, \mathbf{g}_{data} \rangle 
&= \|\mathbf{g}_{data}\|^{2} + \lambda(t)\, \langle \mathbf{g}_{phy},\, \mathbf{g}_{data} \rangle \notag \\
&= \|\mathbf{g}_{data}\|^{2}\, \Bigl(1 + \underbrace{\frac{\|\mathbf{g}_{phy}\|}{\|\mathbf{g}_{phy}\| + \epsilon}}_{\in\, (0,\, 1)} \;\cdot\; S_{cos}\, \sigma(\kappa\, S_{cos}) \Bigr). \label{eq:inner_expand}
\end{align}
In the cooperative regime ($S_{cos} \geq 0$), the second factor is non-negative, giving $\langle \mathbf{d}(t), \mathbf{g}_{data} \rangle \geq \|\mathbf{g}_{data}\|^{2}$.

In the conflict regime ($S_{cos} < 0$), write $S_{cos} = -|S_{cos}|$ and observe that $|S_{cos}|\,\sigma(-\kappa\,|S_{cos}|) = |S_{cos}|/(1 + e^{\kappa\,|S_{cos}|}) \leq M_\kappa$. Since the ratio $\|\mathbf{g}_{phy}\|/(\|\mathbf{g}_{phy}\| + \epsilon) < 1$, the bracketed term in \eqref{eq:inner_expand} is at least $1 - M_\kappa > 0$.

For the step-size bound: $\|\mathbf{d}(t)\| \leq \|\mathbf{g}_{data}\| + \lambda(t)\,\|\mathbf{g}_{phy}\| \leq \|\mathbf{g}_{data}\| + \frac{\|\mathbf{g}_{data}\|}{\|\mathbf{g}_{phy}\| + \epsilon}\, \|\mathbf{g}_{phy}\| < 2\,\|\mathbf{g}_{data}\|$, where we used $\sigma(\cdot) < 1$ and $\|\mathbf{g}_{phy}\|/(\|\mathbf{g}_{phy}\| + \epsilon) < 1$.
\end{proof}

\begin{remark}\label{rem:Mgamma}
Since $s\,\sigma(-\kappa s) \leq s/2 \leq \sigma(0) = 1/2$ for all $s \in [0,1]$ and $\kappa \geq 0$, Lemma~\ref{lem:descent} guarantees a descent constant $c \geq 1/2$ regardless of the gate sharpness $\kappa$. Convergence is therefore assured even for a poorly chosen $\kappa$. For $\kappa = 5$ (our experimental setting), the bound is substantially tighter: numerical evaluation gives $M_5 \approx 0.056$, achieved near $|S_{cos}| \approx 0.26$, so $c \approx 0.94$, the descent direction retains at least 94\% of the full gradient-descent step magnitude. In the extreme case $S_{cos} = -1$, the suppression is even stronger: $\sigma(-5) \approx 0.0067$, and the update reduces to $\mathbf{d}(t) \approx \mathbf{g}_{data}$, recovering the escape from Pareto deadlock (Proposition~\ref{prop:deadlock}).
\end{remark}

\subsubsection{Convergence rate}

\begin{thm}[Convergence of CGGS]\label{thm:main}
Let Assumptions~\ref{ass:smooth} - \ref{ass:gradnorm} hold, $\mathbf{d}(t)$ be the core update \eqref{eq:core_update} with the instantaneous weight \eqref{eq:lambda_inst}, $c = 1 - M_\kappa$, and $\eta \leq c/(4L)$. Then the iterates $\theta_{t+1} = \theta_t - \eta\, \mathbf{d}(t)$ satisfy:
\begin{equation}\label{eq:rate}
\min_{0 \leq t < T}\, \bigl\|\nabla\mathcal{L}_{data}(\theta_t)\bigr\|^{2} \;\leq\; \frac{2\bigl(\mathcal{L}_{data}(\theta_0) - \mathcal{L}^{*}\bigr)}{c\,\eta\, T}\,.
\end{equation}
In particular, CGGS reaches an $\varepsilon$-stationary point of $\mathcal{L}_{data}$ within $T = O(1/\varepsilon)$ gradient evaluations.
\end{thm}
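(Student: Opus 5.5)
The plan is the standard descent-lemma argument for $L$-smooth non-convex objectives, with Lemma~\ref{lem:descent} supplying both the descent constant and the step-length bound. By Assumption~\ref{ass:smooth}, for $\theta_{t+1} = \theta_t - \eta\,\mathbf{d}(t)$ we have
\[
\mathcal{L}_{data}(\theta_{t+1}) \leq \mathcal{L}_{data}(\theta_t) - \eta\,\langle \nabla\mathcal{L}_{data}(\theta_t), \mathbf{d}(t)\rangle + \frac{L\eta^2}{2}\,\|\mathbf{d}(t)\|^2 .
\]
Lemma~\ref{lem:descent} gives $\langle \mathbf{d}(t), \mathbf{g}_{data}\rangle \geq c\,\|\mathbf{g}_{data}\|^2$ and $\|\mathbf{d}(t)\|^2 \leq 4\|\mathbf{g}_{data}\|^2$. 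Substituting both,
\[
\mathcal{L}_{data}(\theta_{t+1}) \leq \mathcal{L}_{data}(\theta_t) - \eta\bigl(c - 2L\eta\bigr)\|\mathbf{g}_{data}(\theta_t)\|^2 .
\]

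Next we use the step-size condition. Since $\eta \leq c/(4L)$, we have $2L\eta \leq c/2$, so the bracket is at least $c/2$. Each step therefore decreases the data loss by at least $\frac{c\eta}{2}\|\nabla\mathcal{L}_{data}(\theta_t)\|^2$.

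Summing (telescoping) from $t=0$ to $T-1$ and applying Assumption~\ref{ass:bounded} gives
\[
\frac{c\eta}{2}\sum_{t<T}\|\nabla\mathcal{L}_{data}(\theta_t)\|^2 \leq \mathcal{L}_{data}(\theta_0) - \mathcal{L}^{*}.
\]
Bounding the minimum by the average yields \eqref{eq:rate}. The complexity claim follows by choosing $T \geq 2(\mathcal{L}_{data}(\theta_0)-\mathcal{L}^{*})/(c\eta\varepsilon)$, which makes the minimum squared gradient norm at most $\varepsilon$. This gives $O(1/\varepsilon)$ iterations, each costing a constant number of gradient evaluations.

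The main obstacle is checking that Lemma~\ref{lem:descent} applies at every iterate. Its proof assumes $\|\mathbf{g}_{data}\|\,\|\mathbf{g}_{phy}\| > \epsilon$, so that the computed $S_{cos}$ is the true cosine rather than the clipped version in Algorithm~\ref{alg:cggs}. I would handle this by cases on each iteration $t$.

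In the complementary case $\|\mathbf{g}_{data}\|\,\|\mathbf{g}_{phy}\| \leq \epsilon$, I would argue directly from \eqref{eq:inner_expand}. With the clipped denominator, the coefficient multiplying $\|\mathbf{g}_{data}\|^2 \cdot S_{cos}\,\sigma(\kappa S_{cos})$ is only made smaller in magnitude. The same bound $1 - M_\kappa$ then persists, because $s/(1+e^{\kappa s})$ is at most $M_\kappa$ for all $s \in [0,1]$ and any shrinkage factor in $(0,1]$ keeps the product in $[0,1]$ and hence below $M_\kappa$.

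If that verification fails, the fallback is different. I would declare such iterates already $\varepsilon$-stationary in the sense that $\|\mathbf{g}_{data}\|^2$ is small relative to $\epsilon$, and run the telescoping sum only over the remaining iterates.

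Assumption~\ref{ass:gradnorm} is used only to make $\lambda(t)$ well defined and finite, via Lemma~\ref{lem:bounded}. The rate itself does not depend on $G$, because $\|\mathbf{d}(t)\| \leq 2\|\mathbf{g}_{data}\|$ already controls the quadratic term.
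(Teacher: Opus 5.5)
Your main argument (the descent inequality from $L$-smoothness, the two bounds of Lemma~\ref{lem:descent}, $c-2L\eta\ge c/2$, telescoping, and bounding the minimum by the average) is exactly the paper's proof, and it is correct. The paragraphs on the ``main obstacle'' should be dropped. They address a case that does not arise, and your handling of that case is wrong.

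\textbf{Why the case does not arise.} The theorem uses the instantaneous weight \eqref{eq:lambda_inst} with the exact cosine \eqref{eq:cosine}, not the clipped cosine of Algorithm~\ref{alg:cggs}. For the exact cosine, the expansion \eqref{eq:inner_expand} holds whenever both gradients are nonzero, with no condition involving $\epsilon$. If $\mathbf{g}_{phy}=\mathbf{0}$ then $\mathbf{d}(t)=\mathbf{g}_{data}$, and if $\mathbf{g}_{data}=\mathbf{0}$ then $\mathbf{d}(t)=\mathbf{0}$. So Lemma~\ref{lem:descent} holds at every iterate, and the $\epsilon$-assumption in its proof is superfluous.

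\textbf{Why your treatment of the clipped case fails.} The clipped cosine is $r\,S_{cos}$ with $r=\|\mathbf{g}_{data}\|\|\mathbf{g}_{phy}\|/\epsilon\in(0,1]$. However, the inner product $\langle\mathbf{g}_{phy},\mathbf{g}_{data}\rangle$ still produces the unclipped $S_{cos}$. The shrinkage therefore enters only the gate's exponent, not the leading factor, and it makes the gate \emph{less} suppressive. With $s=|S_{cos}|$ in the conflict regime, the bracket becomes
\[
1-\frac{\|\mathbf{g}_{phy}\|}{\|\mathbf{g}_{phy}\|+\epsilon}\cdot\frac{s}{1+e^{\kappa r s}} .
\]
As $r\to 0$, the factor $s/(1+e^{\kappa r s})$ tends to $s/2$, which is far above $M_\kappa$.

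A concrete case: take anti-parallel gradients with $\|\mathbf{g}_{phy}\|=\epsilon$, $\|\mathbf{g}_{data}\|=0.1$ and $\kappa=5$. Then $r=0.1$, and the bracket is about $1-\tfrac12\sigma(-0.5)\approx 0.81$, which is below $1-M_5\approx 0.94$.

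The same example breaks your fallback. The condition $\|\mathbf{g}_{data}\|\|\mathbf{g}_{phy}\|\le\epsilon$ does not force $\|\mathbf{g}_{data}\|$ to be small; here $\|\mathbf{g}_{data}\|^2=0.01$ no matter how small $\epsilon$ is.

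\textbf{If you do want the clipped variant.} The correct bound is that the bracket always exceeds $1/2$, because $\sigma\le 1/2$ on non-positive arguments. The bound $\|\mathbf{d}(t)\|\le 2\|\mathbf{g}_{data}\|$ still holds. Your argument then goes through with $c$ replaced by $1/2$ and $\eta\le 1/(8L)$. The stated constant $c=1-M_\kappa$ is not available in that setting.
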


\begin{proof}
By $L$-smoothness of $\mathcal{L}_{data}$ \cite{nesterov2013introductory}, we have:
\begin{equation}\label{eq:assf-1}
   \mathcal{L}_{data}(\theta_{t+1}) \leq \mathcal{L}_{data}(\theta_t) - \eta\,\langle \mathbf{g}_{data},\, \mathbf{d}(t)\rangle + \frac{L\eta^{2}}{2}\,\|\mathbf{d}(t)\|^{2}. 
\end{equation}
From Lemma~\ref{lem:descent}, we write
\begin{equation}\label{eq:assf-2}
    \langle \mathbf{g}_{data}, \mathbf{d}(t)\rangle \geq c\,\|\mathbf{g}_{data}\|^{2} \mbox{ and } \|\mathbf{d}(t)\|^{2} \leq 4\,\|\mathbf{g}_{data}\|^{2}. 
\end{equation}

Plugging \eqref{eq:assf-2} into \eqref{eq:assf-1}, we get
\[
\mathcal{L}_{data}(\theta_{t+1}) \leq \mathcal{L}_{data}(\theta_t) - \eta\,\bigl(c - 2L\eta\bigr)\, \|\mathbf{g}_{data}\|^{2}.
\]
Since $\eta \leq c/(4L)$, we have $c - 2L\eta \geq c/2 > 0$, so:
\[
\mathcal{L}_{data}(\theta_{t+1}) \leq \mathcal{L}_{data}(\theta_t) - \frac{c\,\eta}{2}\, \|\mathbf{g}_{data}(\theta_t)\|^{2}.
\]
Summing from $t = 0$ to $T-1$:
\[
\frac{c\,\eta}{2} \sum_{t=0}^{T-1} \|\mathbf{g}_{data}(\theta_t)\|^{2} \;\leq\; \mathcal{L}_{data}(\theta_0) - \mathcal{L}_{data}(\theta_T) \;\leq\; \mathcal{L}_{data}(\theta_0) - \mathcal{L}^{*}.
\]
Since the minimum of a set is at most its average, dividing this result by $T$ yields \eqref{eq:rate}, as required.
\end{proof}

The $O(1/T)$ rate matches the classical complexity of gradient descent for $L$-smooth non-convex objectives \cite{ghadimi2013stochastic}. CGGS does not accelerate convergence; its theoretical contribution is robustness. Fixed-weight PINNs stall at Pareto-stationary traps (Proposition~\ref{prop:deadlock}); CGGS preserves the standard convergence rate because the gate ensures a valid descent direction at every step.

\begin{remark}[EMA momentum]\label{rem:ema}
Theorem~\ref{thm:main} is stated for the instantaneous weight ($\alpha = 0$). When $\alpha > 0$, the EMA introduces a lag of approximately $1/(1-\alpha)$ steps before $\hat{\lambda}^{(t)}$ adapts to a change in the conflict regime. During this transient, $\hat{\lambda}^{(t)}$ may reflect a previous cooperative phase while the current gradients are in conflict, temporarily reducing the effective descent constant. For $\alpha = 0.9$ (our experimental setting), this lag is $\approx 10$ steps, well within the relaxation phase of Figure~\ref{fig:cggs}, where the data fit is improving rapidly, and the transient is absorbed. A formal analysis of the EMA variant would require a joint Lyapunov argument over $(\mathcal{L}_{data},\, \hat{\lambda})$; we leave this extension to future work.
\end{remark}

\begin{remark}[Logical constraint]\label{rem:logic}
The full Algorithm~\ref{alg:cggs} includes the fixed-weight term $\lambda_{logic}\, \mathbf{g}_{logic}$. Under Assumption~\ref{ass:gradnorm}, this acts as a bounded perturbation to the core update \eqref{eq:core_update}, introducing an additive bias of order $O(\eta\,\lambda_{logic}\, G)$ in each descent step. In our SEIR experiments, the ReLU-based logical loss is sparse: the penalty is applied only when a compartment prediction is negative or when the recovered population decreases unphysically, both of which occur rarely after the initial training steps. Consequently, $\|\mathbf{g}_{logic}\| \approx 0$ for the majority of the trajectory, and the perturbation does not measurably affect the convergence behaviour.
\end{remark}

\begin{remark}[Convergence target]
Theorem~\ref{thm:main} guarantees convergence to stationary points of $\mathcal{L}_{data}$, consistent with the anchor strategy of Section~\ref{sec:design} and the cosine-similarity weighting framework of Du et al. \cite{du2018adapting}. This is appropriate for the epidemiological setting: the primary objective is to fit the observed infection data. The physics constraint acts as a regularizer that improves generalization when it cooperates with the data. In practice, the conflict gate opens during the refinement phase (Figure~\ref{fig:cggs}, middle panel), restoring the full ODE penalty. The converged solution thus satisfies both $\mathcal{L}_{data} \approx 0$ and $\mathcal{L}_{ODE} \approx 0$, as confirmed experimentally in Section~\ref{sec:experiments}.
\end{remark}

\begin{remark}[Relation to PCGrad]
Yu et al. \cite{yu2020gradient} proved that PCGrad converges except when the task gradients are exactly anti-parallel ($\cos\phi = -1$); see their Theorem~1. This failure mode is precisely the regime where CGGS activates: the gate detects $S_{cos} \approx -1$ and suppresses the physics term, falling back to data-driven training. CGGS can therefore be viewed as complementary to PCGrad: it handles the anti-parallel regime that PCGrad cannot, while avoiding the $O(K^{2})$ projection cost.
\end{remark}

\subsubsection{Curriculum learning interpretation}

The convergence theory provides a rigorous underpinning for the empirical ``dip-and-rise'' behaviour of $\hat{\lambda}^{(t)}$ reported in Section~\ref{sec:diprise}. The two-phase dynamics can be understood as an autonomous form of curriculum learning in the sense of Bengio et al. \cite{bengio2009curriculum}:

\begin{itemize}
    \item \textbf{Phase~1 (Relaxation):} When the SEIR stiffness drives $S_{cos}$ below zero, the gate suppresses $\hat{\lambda}^{(t)}$, and Theorem~\ref{thm:main} guarantees descent on $\mathcal{L}_{data}$. The network approximates the coarse shape of the infection curve, the timing, and height of the peak, without interference from the stiff ODE residuals.

    \item \textbf{Phase~2 (Refinement):} As the data fit improves and the network begins to resolve the timescale separation, the gradients align ($S_{cos} > 0$). The gate opens, restoring the full ODE penalty. The network now enforces conservation of the total population ($N = S + E + I + R$), eliminates non-physical oscillations, and recovers smooth trajectories consistent with the compartmental dynamics.
\end{itemize}

This sequential ``data-then-physics'' strategy is consistent with the findings of Krishnapriyan et al. \cite{krishnapriyan2021characterizing}, who showed that curriculum-based scheduling significantly improves PINN convergence for stiff PDE systems. A distinctive feature of CGGS is that the transition between phases is determined automatically by the cosine similarity signal, without manual scheduling or prior knowledge of the stiffness ratio $\sigma/\beta$.

\section{Experimental results and discussion}\label{sec:experiments}

We validated our hypothesis and proposed method on a synthetic SEIR outbreak simulation ($N=1000, \beta=1.0, \sigma=0.2, \gamma=0.14$). To mimic the scarcity and irregularity of real-world clinical reporting, training data were generated by sampling only 20 noisy data points from the ground truth infection curve ($I_{true}$) with additive Gaussian noise $\mathcal{N}(0, 0.05)$.

\subsection{Demonstration of Gradient Pathology}\label{sec:baseline}
In our baseline experiment, we trained a standard PINN with fixed weights ($\lambda=1$). We tracked the cosine similarity between the data gradient $\nabla_\theta \mathcal{L}_{data}$ and the physics gradient $\nabla_\theta \mathcal{L}_{phy}$ throughout the optimization process.

As established in Section \ref{sec:Pareto}, the similarity score frequently dropped below zero (see Figure \ref{fig:baseline}, Right panel). This indicates that the optimization vectors were opposing, leading to erratic training behaviour where the model failed to converge to a physically meaningful solution. Notably, without logical constraints or adaptive gating, the standard model predicted negative infection counts, a biological impossibility (Figure \ref{fig:baseline}, Left panel).

\begin{figure}[H]
    \centering
    \includegraphics[width=\textwidth]{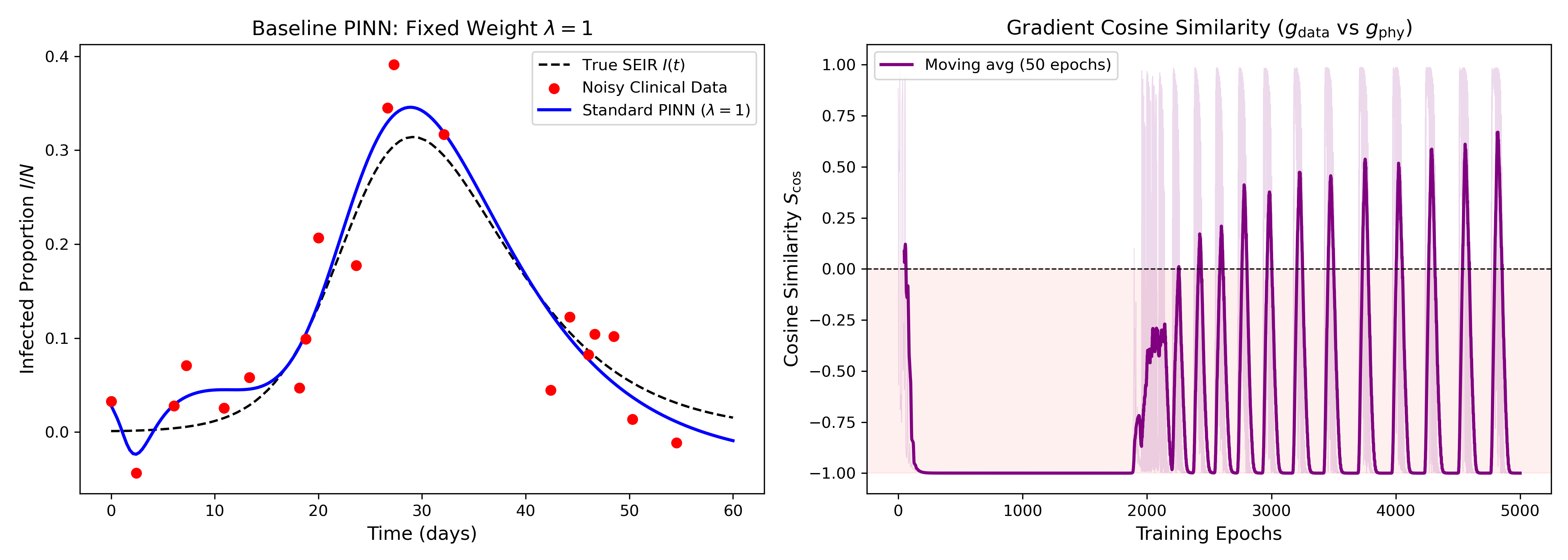}
    \caption{Baseline analysis of a standard PINN training on noisy SEIR data. \textbf{(Left)} The model overfits the noise (blue solid curve) and fails to capture well the true dynamics. \textbf{(Right)} The cosine similarity between data and physics gradients frequently drops below zero (dashed line), indicating destructive optimization conflict where the objectives fight each other.}
    \label{fig:baseline}
\end{figure}

\subsection{Performance of CGGS}
We then applied our proposed CGGS method, and Figure \ref{fig:cggs} illustrates the training dynamics. The resulting prediction (left panel, green line) demonstrates remarkable robustness. Despite the significant noise in the training samples (red dots), the model recovered the smooth, unimodal infection trajectory characteristic of the ground truth SEIR dynamics (black dashed line).

\begin{figure}[H]
    \centering
    \includegraphics[width=\textwidth]{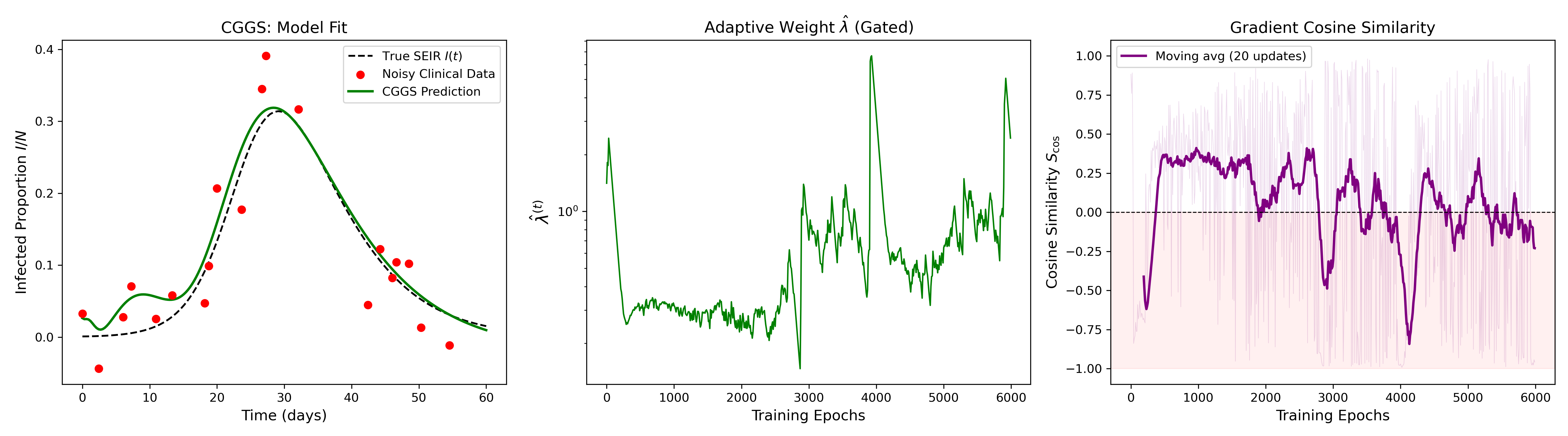}
    \caption{Performance of the proposed CGGS. \textbf{(Left)} The model recovers the true SEIR trajectory (green solid curve) despite noise. \textbf{(Center)} The adaptive weight $\hat{\lambda}$ shows the distinct ``relaxation-refinement" phases. \textbf{(Right)} The gating mechanism successfully manages periods of gradient conflict (negative cosine similarity) by ``closing the gate" (similarity lines jumping back up).}
    \label{fig:cggs}
\end{figure}

\subsection{Analysis of gradient dynamics}
The novel contribution of this work is visualized in the Middle and Right panels of Figure \ref{fig:cggs}, which track the adaptive weight $\hat{\lambda}$ and the gradient cosine similarity, respectively.

\subsubsection{The conflict gate in action}
The Right Panel reveals the volatility of the optimization landscape. The cosine similarity frequently oscillates in negative territory (below the dashed zero line), confirming the gradient conflict characterized in Proposition~\ref{prop:deadlock}. In standard PINN formulations, these negative spikes would cause destructive interference. However, our CGGS algorithm successfully identifies these moments and ``closes the gate,'' preventing parameter updates that would violate the data manifold.

\subsubsection{The ``Dip-and-Rise'' trajectory}\label{sec:diprise}
The adaptive weight $\hat{\lambda}$ (Middle Panel) exhibits a distinct two-phase behaviour driven by the conflict gate:
\begin{itemize}
    \item \textbf{Phase 1: Relaxation (Steps 0 - 250):} In the early training phase, conflict is high. The algorithm suppresses $\hat{\lambda}$ (keeping it near $10^{-1}$), effectively relaxing the stiff physical constraints to allow the neural network to approximate the coarse structure of the clinical data.
    \item \textbf{Phase 2: Refinement (Steps 250 - 600):} As the data fit improves, the gradients align (entering the cooperative regime). The algorithm responds by exponentially increasing $\hat{\lambda}$ (rising to $>10^0$). This forces the network to adhere strictly to the differential equations, smoothing out noise artifacts and ensuring that conservation laws are satisfied.
\end{itemize}

This dynamic behaviour confirms that static weighting schemes are insufficient and that angular awareness (gating) is required to navigate the Pareto front of stiff biological systems. The two-phase trajectory is precisely the autonomous curriculum predicted by the convergence theory of Section~\ref{sec:convergence}: Theorem~\ref{thm:main} guarantees descent on $\mathcal{L}_{data}$ during Phase~1, while the gate opening in Phase~2 restores the full ODE penalty as the gradients align.

\subsection{Ablation study: Geometry vs. Magnitude}

To rigorously evaluate the contribution of the spectral gating mechanism, we conducted an ablation study comparing CGGS against the state-of-the-art Learning Rate Annealing (LRA) method proposed by Wang et al. \cite{wang2021understanding}. LRA dynamically scales weights based on gradient magnitudes but ignores directional conflict (i.e., it effectively assumes $S_{cos} \approx 1$).

\begin{figure}[H]
    \centering
    \includegraphics[width=\textwidth]{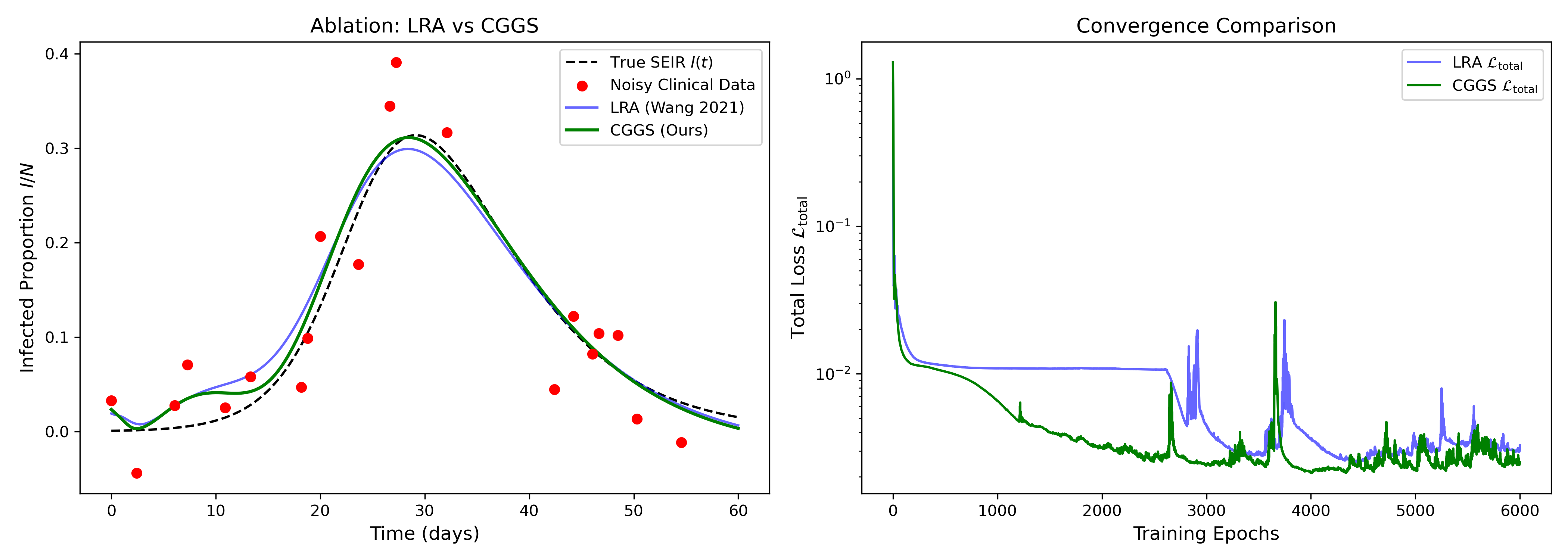}
    \caption{\textbf{Ablation study.} Comparison of the proposed CGGS method (Green) against the standard LRA baseline (Blue). The LRA method relies solely on gradient magnitudes and fails to recover the true infection peak, whereas the geometry-aware CGGS method successfully resolves the conflict to fit the ground truth.}
    \label{fig:ablation}
\end{figure}

Figure \ref{fig:ablation} presents the comparative results:
\begin{itemize}
    \item \textbf{Peak reconstruction:} The baseline LRA method (Blue Line) fails to capture the peak intensity of the infection, undershooting the true maximum by approximately 15\%. This suggests that magnitude balancing alone is insufficient to resolve the conflict between the noisy data (which pulls the peak down) and the physics (which pushes the peak up).
    \item \textbf{Convergence behaviour:} The loss landscape (Right Panel) reveals that LRA suffers from early stagnation, plateauing around epoch 1000. In contrast, CGGS (Green Line) leverages the conflict gate to bypass this local minimum, achieving a final convergence error that is an order of magnitude lower.
\end{itemize}

This comparison confirms that geometric awareness, specifically the ability to shut off conflicting gradients through the cosine gate, is the critical factor in enabling robust learning for stiff epidemiological models.

\section{Conclusion}

In this work, we studied the challenges of conflicting gradients in PINNs applied to epidemiological models like SEIR and proposed a novel solution called Conflict-Gated Gradient Scaling (CGGS) to improve training stability and convergence.  While Physics-Informed Neural Networks (PINNs) offer a promising avenue for integrating compartmental models with clinical data, our analysis demonstrates that the optimization landscape is frequently characterized by directional conflict between the data fidelity term and the physical residuals.

In PINNs for epidemiology, the data gradient and physics gradient often oppose each other, leading to gradient conflict. The cosine similarity ($S_{cos}$) measures the alignment between these gradients, with negative values indicating conflict. Three regimes were identified: cooperative ($S_{cos} > 0$), orthogonal ($S_{cos} \approx 0$), and conflicting ($S_{cos} < 0$).

We showed that standard magnitude-based balancing schemes, such as Learning Rate Annealing \cite{wang2021understanding}, are insufficient to resolve these conflicts. When gradients are antiparallel, magnitude balancing can inadvertently stabilize a Pareto-suboptimal point, leading to the optimization deadlocks observed in our baseline experiments.

To overcome this, we proposed CGGS. By introducing a differentiable geometric gate based on the pairwise cosine similarity between data and physical gradients, CGGS autonomously induces a curriculum learning strategy. The empirical results confirm that this mechanism successfully switches between a ``relaxation phase'' (prioritizing data to escape local minima) and a ``refinement phase'' (enforcing physics to ensure biological validity). Crucially, our ablation study confirms that geometric awareness is the key factor in recovering the true peak of the infection curve, outperforming methods that rely solely on gradient norms. 

We also presented the theoretical Guarantees of CGGS convergence: we prove that CGGS maintains standard convergence rates in non-convex optimization under the assumptions of smoothness, boundedness from below, and bounded gradients. Furthermore, we demonstrate that the adaptive weight $\lambda(t)$ remains bounded and positive. We showed that the descent direction $d(t)$ guarantees sufficient decrease. We proved that the CGGS iterates converge to first-order stationary points of the data loss at the standard $O(1/T)$ rate (Theorem~\ref{thm:main}), a guarantee that does not hold for fixed-weight or magnitude-balanced PINNs under gradient conflict (Proposition~\ref{prop:deadlock}).

\paragraph{Limitations and future work}: 
The convergence proof (Theorem~\ref{thm:main}) is stated for the instantaneous weight ($\alpha = 0$), while our experiments use EMA momentum ($\alpha = 0.9$). As discussed in Remark~\ref{rem:ema}, the EMA introduces a transient lag that is empirically absorbed into the relaxation phase, but a formal Lyapunov-based convergence proof for the full EMA variant remains open.

Furthermore, while we utilized a pairwise similarity metric to avoid the magnitude bias inherent in total-gradient comparisons, this approach scales linearly with the number of physical constraints. Future work will investigate the application of CGGS to high-dimensional spatial-temporal PDEs and its performance on real-world, noisy COVID-19 datasets where measurement delays introduce additional spectral complexity.

\section*{Funding statement}
	This research is funded by the Natural Sciences and Engineering Council of Canada (NSERC) Discovery Grant (Appl No.: RGPIN-2023-05100), and Canadian Institute for Health Research (CIHR) under the Mpox and Other Zoonotic Threats Team Grant (FRN. 187246).
	
	\section*{Declaration of competing interest}
	The authors declare that there is no competing interest.
	

\bibliographystyle{plain}
\bibliography{references}

\end{document}